\def\BibTeX{{\rm B\kern-.05em{\sc i\kern-.025em b}\kern-.08em
    T\kern-.1667em\lower.7ex\hbox{E}\kern-.125emX}}
\newtheorem{theorem}{Theorem}
\newtheorem{proposition}[theorem]{Proposition}
\begin{document}

\title{LEGATO: A LayerwisE Gradient AggregaTiOn Algorithm for Mitigating Byzantine Attacks in Federated Learning
}

\author{\IEEEauthorblockN{Kamala Varma}
\IEEEauthorblockA{\textit{University of Maryland,} \\
\textit{College Park\textsuperscript{*}} \\
\textit{kvarma@umd.edu}}
\and
\IEEEauthorblockN{Yi Zhou}
\IEEEauthorblockA{\textit{IBM Research,} \\
\textit{Almaden Research Center} \\
\textit{yi.zhou@ibm.com}}
\and
\IEEEauthorblockN{Nathalie Baracaldo}
\IEEEauthorblockA{\textit{IBM Research,} \\
\textit{Almaden Research Center} \\
\textit{baracald@us.ibm.com}}
\and
\IEEEauthorblockN{Ali Anwar}
\IEEEauthorblockA{\textit{IBM Research,} \\
\textit{Almaden Research Center} \\
\textit{ali.anwar2@ibm.com}}
}

\maketitle
\begingroup\renewcommand\thefootnote{*}
\footnotetext{This work was completed when the first author was doing an internship at IBM Almaden Research Center.}
\endgroup

\begin{abstract}
Federated learning has arisen as a mechanism to allow multiple participants to collaboratively train a model without sharing their data. In these settings, participants (workers) may not trust each other fully;
for instance, a set of competitors may collaboratively train a machine learning model to detect fraud.  The workers provide local gradients that a central server uses to update a global model.  This global model can be corrupted when Byzantine workers send malicious gradients, which necessitates robust methods for aggregating gradients that mitigate the adverse effects of Byzantine inputs.  Existing robust aggregation algorithms are often computationally expensive and only effective under strict assumptions. In this paper, we introduce LayerwisE Gradient AggregatTiOn (LEGATO), an aggregation algorithm that is, by contrast, scalable and generalizable.  Informed by a study of layer-specific responses of gradients to Byzantine attacks, LEGATO employs a dynamic gradient reweighing scheme that is novel in its treatment of gradients based on layer-specific robustness. We show that LEGATO is more computationally efficient than multiple state-of-the-art techniques and more generally robust across a variety of attack settings in practice. We also demonstrate LEGATO's benefits for gradient descent convergence in the absence of an attack.
\end{abstract}

\begin{IEEEkeywords}
federated learning, deep learning, Byzantine attacks, robust aggregation.
\end{IEEEkeywords}

\section{Introduction}
 Federated learning (FL)\cite{mcmahan2016communicationefficient,konen2015federated} is a machine learning paradigm that allows multiple entities, called workers, to train a global model based on the collection of their raw data without exchanging any of it.
 Each worker trains a local model on their own data and periodically sends local gradient updates to a central parameter server. This central server is mostly launched as a cloud-based service~\cite{ibmfl2020, flblog}. The job of the server is to use the received gradients to compute an aggregated gradient, conventionally defined as a simple or weighted average of gradients~\cite{abadi2016tensorflow,mcmahan2016communicationefficient}, which is used to update the global model via usually a gradient descent step.  The workers benefit from receiving the resulting global model because utilizing more data in machine learning typically equates to better models.  
 
 Federated Learning could materialize in an Internet of the things (IoT) setting where edge devices use FL to learn from all collected data without sacrificing limited local storage or the privacy of the collected data.  FL can also be used by a consortium of competing companies, for example, hospitals that want to collaboratively learn a model for diagnosing diseases without violating privacy regulations like the Health Insurance Portability and Accountability Act (HIPAA)\cite{annas2003hipaa} or General Data Protection Regulation (GDPR)\cite{goddard2017eu}. In this setting, each worker itself can use public/private cloud, on-premises, or a hybrid setup to train models locally based on their private data and communicate only model updates in a FL system.
 
If one or more workers within a FL system have inaccurate data or encounter communication errors, the consequently deficient information they provide can compromise the quality of the global model.  This is called a \textit{Byzantine failure}~\cite{10.1145/357172.357176}.
A Byzantine failure may occur unintentionally or may be executed in the form of an attack wherein malicious workers intentionally corrupt a FL system by strategically providing dishonest gradients.  This powerful attack can be performed by a single worker, e.g, Gaussian attacks \cite{xie2018generalized}, and can adversely affect indefinite numbers of models that are used in sensitive areas such as healthcare, banking, and personal devices among others.  Since conventional averaging aggregation schemes are highly vulnerable to Byzantine attacks, it is important to derive alternative methods of aggregating gradients that are Byzantine-robust, meaning the aggregation will mitigate the adverse effects of dishonest gradients in a FL setting.

In this paper, we propose LEGATO, a LayerwisE Gradient AggregaTiOn algorithm that is the first to 
use a gradient reweighing scheme at layer level to dampen gradient variance with the intent to increase model robustness in FL. 
LEGATO's performance is consistent across a variety of practical settings and has computational complexity that is linear in terms of the number of workers. These qualities are crucial when deploying FL systems that require large scale participation and involve exposure to unpredictable attack specifications.

\noindent{\bf Contributions.}
Our main contributions are as follows:
\begin{itemize}
    \item Through a pilot study, we show that robustness against Byzantine attacks is layer specific and associated with the variance of the layer-specific $\ell_2$ norms of worker gradients.
    \item We translate the observations from the pilot study into a gradient aggregation algorithm, LEGATO, that reweighs gradients by layer to increase robustness against Byzantine attacks.
    \item We analyze the computational complexity and memory consumption of LEGATO. We prove that LEGATO's computational complexity is linear in terms of the number of workers, which is comparable to conventional, non-robust algorithms, and superior to some robust algorithms including Krum~\cite{blanchard2017byzantinetolerant} and multi-Krum~\cite{blanchard2017byzantinetolerant}.
    \item Through experimentation against a variety of attacks settings and a vanillar FL setting without attacks, including the use of both IID and non-IID local data distributions and different model architectures, we show that LEGATO is consistently more robust than gradient averaging under IID  and overparameterized settings and generally more robust than Krum and coordinate median under non-IID settings.
\end{itemize}

\noindent{\bf Outline.}
We devoted Section~\ref{sec:background} to discuss the background and motivations of our work.
Section~\ref{sec:study} describes a pilot study of layer-specific gradient response to Byzantine attacks, which informs our proposed algorithm named LEGATO, described in Section~\ref{sec:legato}.  We compare the performance of LEGATO to a baseline algorithm and two state-of-the-art solutions in a variety of settings in Section~\ref{sec:experiments}.
We provide discussion of LEGATO and related work in Section~\ref{sec:remark} and~\ref{sec:related_art} respectively. 
We conclude with final remarks in Section~\ref{sec:conclude}.

\section{Background and Motivations}\label{sec:background}
In this section, we aim to provide more background information about non-IID worker distribution and its associated challenges we may face when dealing with Byzantine threats.

In a federated learning system, workers will collect and maintain their own training datasets, and no training data samples will leave their owners. 
Workers therefore collect their training data samples from different data sources, for example, in a federated learning task involving collaboration among multiple hospitals, the local training data distribution may be affected by the speciality of the hospital, which might lead to heterogeneous local distribution like some hospitals have more children patients and others have more adult patients.
While the aforementioned case is a feature-level non-IID distribution setting, a label-level non-IID distribution setting is also possible. 
For instance, considering a federated learning system with multiple cellphones as workers trying to train an image recognition model using the photos storing in their cellphones to identify animals. 
For cellphone owners having a dog as a pet, they would have more dog pictures in their local training datasets comparing to cat owners having more cat pictures.

As one of the key features for federated learning, heterogeneous local data distribution has already raised challenges even for vanilla federated learning settings.
In fact, it significantly affects the global model's performance and hence has motivated a new line of research tackling this issue, e.g., \cite{li2018federated,gao2019privacy,wang2020federated} and the references there in.
However, the situation is even more complicated in a real-world scenario since there can be Byzantine threats presented in a federated learning system.

\subsection{Byzantine Threats}
\label{sec:byzantine}

\begin{figure*}[h]
 \centering
  \includegraphics[width=0.75\linewidth]{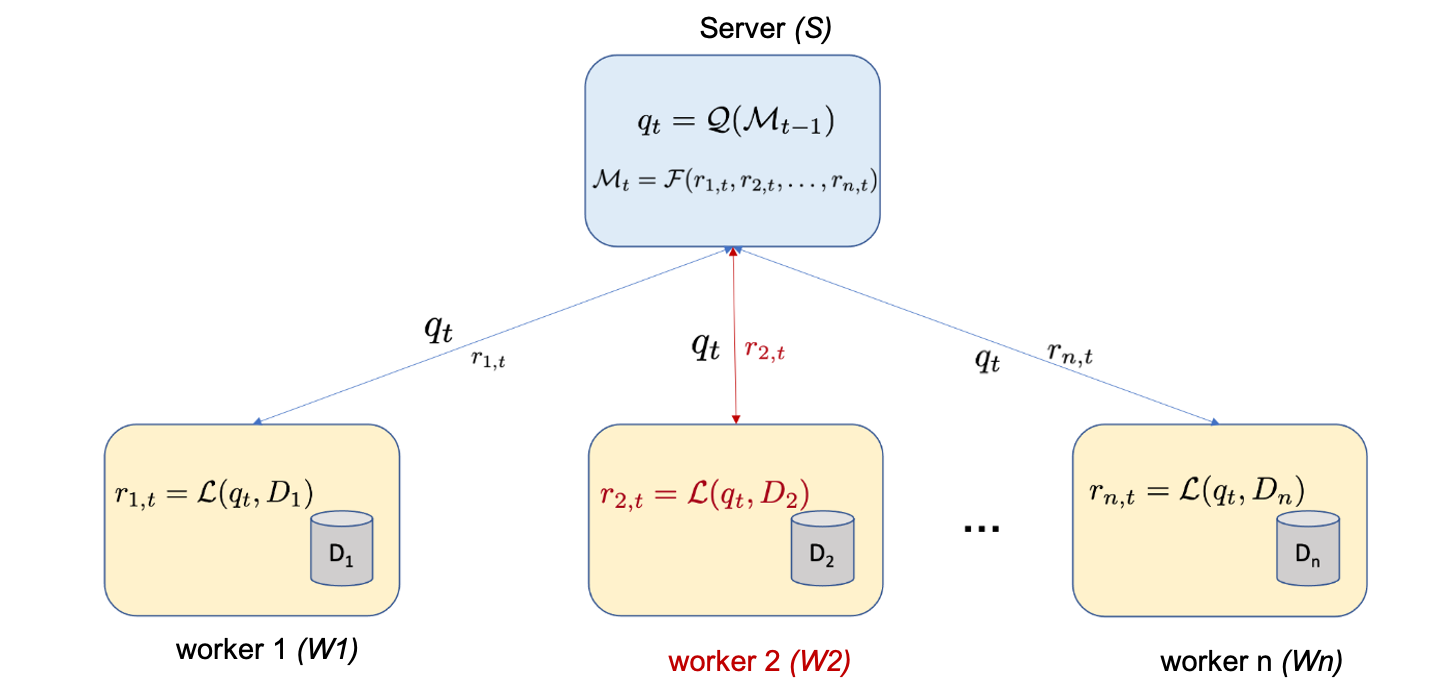}
  \caption{A federated learning system with $n$ workers where the second worker's reply encounters Byzantine failure.}
\label{fig:fl_byzantine}
\end{figure*}

Our goal in this section is to formally define byzantine threats potentially presented in a federated learning system.
In particular, we devote our efforts to discussions on Byzantine failures and Byzantine attacks, which both leads to diminishing final global model performance.

We formally define a \textit{Byzantine Failures}~\cite{10.1145/357172.357176} as in a federated learning system when one or more workers have malfunctions in their computing devices or encountered communication errors, and consequently send deficient information to the parameter server which may compromise the quality of the global model.
As shown in Figure~\ref{fig:fl_byzantine}, consider a federated learning system with $n$ workers where at $t$-round the reply from worker $W_2$ received by the server is corrupted as colored in red. 
Such a Byzantine failure may occur unintentionally or may be executed in the form of an attack, which we refer as \textit{Byzantine Attacks}, wherein malicious entities intentionally corrupt a FL system by providing strategically dishonest replies. This powerful attack can be performed by only a single worker like $W_2$ in Figure~\ref{fig:fl_byzantine} and can adversely affect indefinite numbers of models that are used in sensitive areas such as healthcare, banking, and personal devices and so on. 
We refer the worker unintentionally sending inaccurate replies or intentionally performing \textit{Byzantine attacks} as a \textit{Byzantine worker}.

\paragraph{\bf  Gaussian Attack \cite{xie2018generalized}} This type of Byzantine attack can be performed by a single worker in a federated learning system, and do not need any collaboration among Byzantine workers. Workers performing this attack will randomly sample their replies from a Gaussian distribution, $N(\mu, \sigma^2I)$ regardless their local training datasets.

\paragraph{\bf Fall of Empires Attack \cite{xie2019fall}}
Designed to break robust aggregation algorithms like, Krum \cite{blanchard2017byzantinetolerant} and coordinate median \cite{yin2018byzantinerobust} , it requires a minimum number of Byzantine workers, depending on the robust algorithms, to collaboratively construct the attack.
Moreover, it assumes the Byzantine workers know about the replies sent from the honest workers.
Let $v_i, i=1, ...,h$ to be the replies sent by $h$ honest workers, the malicious replies sent by the Byzantine workers who perform the Fall of Empires attack will send replies, $u_j$, formulated as:
\begin{equation}
u_1 = u_2 = ... = u_{n-h} = -\frac{\epsilon}{h}\sum_{i=1}^h v_i
,
\end{equation}
where $\epsilon$ is a real, non-negative factor that depends on the lower and upper bounds of the distances between each reply.

\subsection{Challenges}
The aforementioned several Byzantine attacks are very powerful. We can see from Fig~\ref{fig:attvsnat} that Gaussian attack with $\sigma$ having larger values creates more damage to the global model's performance.

\begin{figure}[h!]
\centering
  \includegraphics[width=\linewidth]{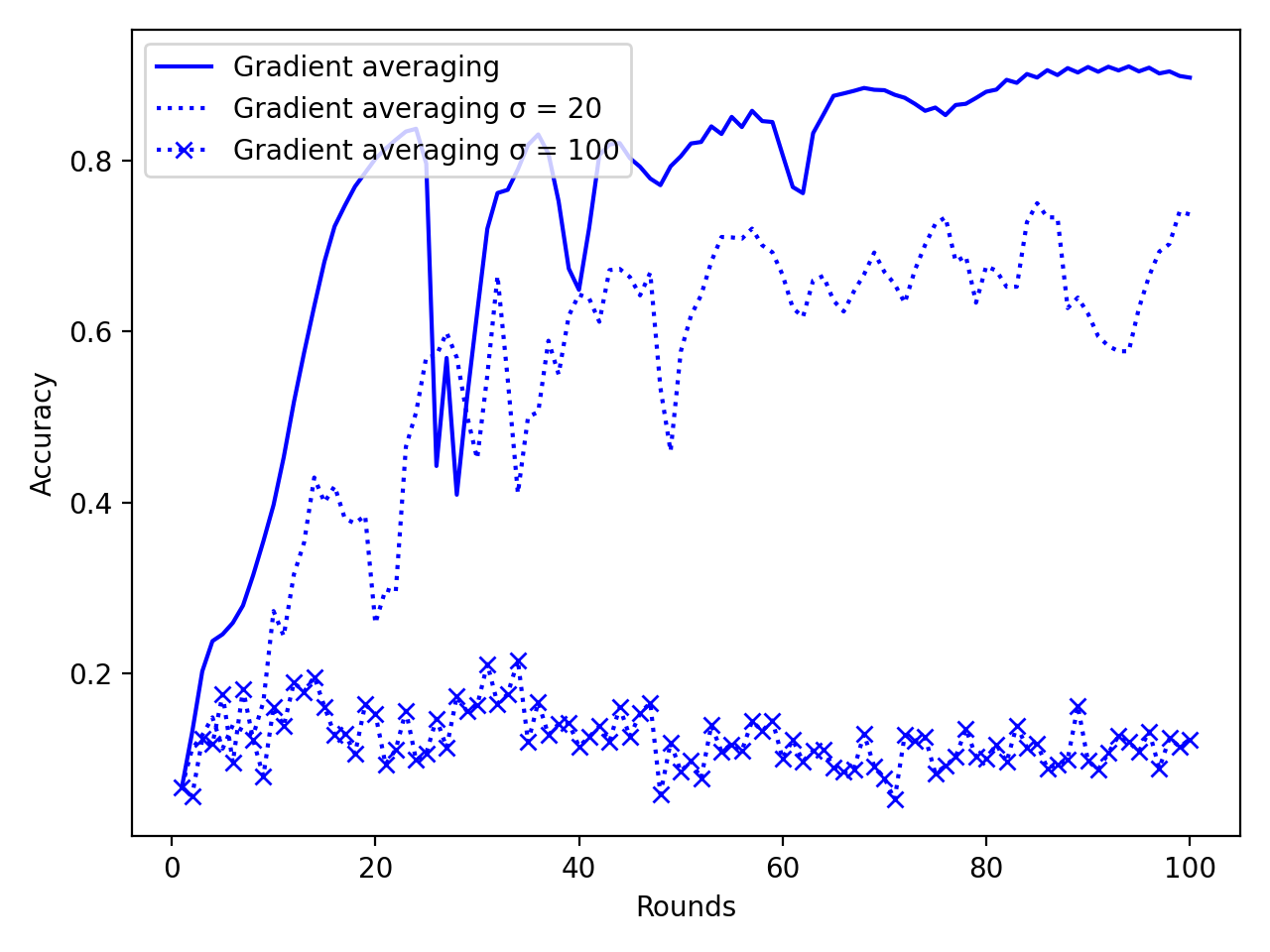}
  \caption{This experiment involves $25$ workers with $1,000$ IID data points from MNIST. We compare the performance of gradient averaging where $4$ Byzantine workers executing a Gaussian attack with $\mu = 0$ and $\sigma$. Here we use batch size of $50$ and learning rate of $0.05$.}
\label{fig:attvsnat}
\end{figure}

Although Krum and coordinate median are proved to successfully identify the malicious Byzantine workers in the setting where workers have independent identical distributed (IID) local datasets, see Figure~\ref{fig:iidvsnoniid}, they failed significantly when workers have heterogeneous local dataset.
It can also be shown that even without any Byzantine attacks presented these robust aggregation algorithms cannot guarantee a global model with good performance.
\begin{figure}[h!]
\centering
  \includegraphics[width=\linewidth]{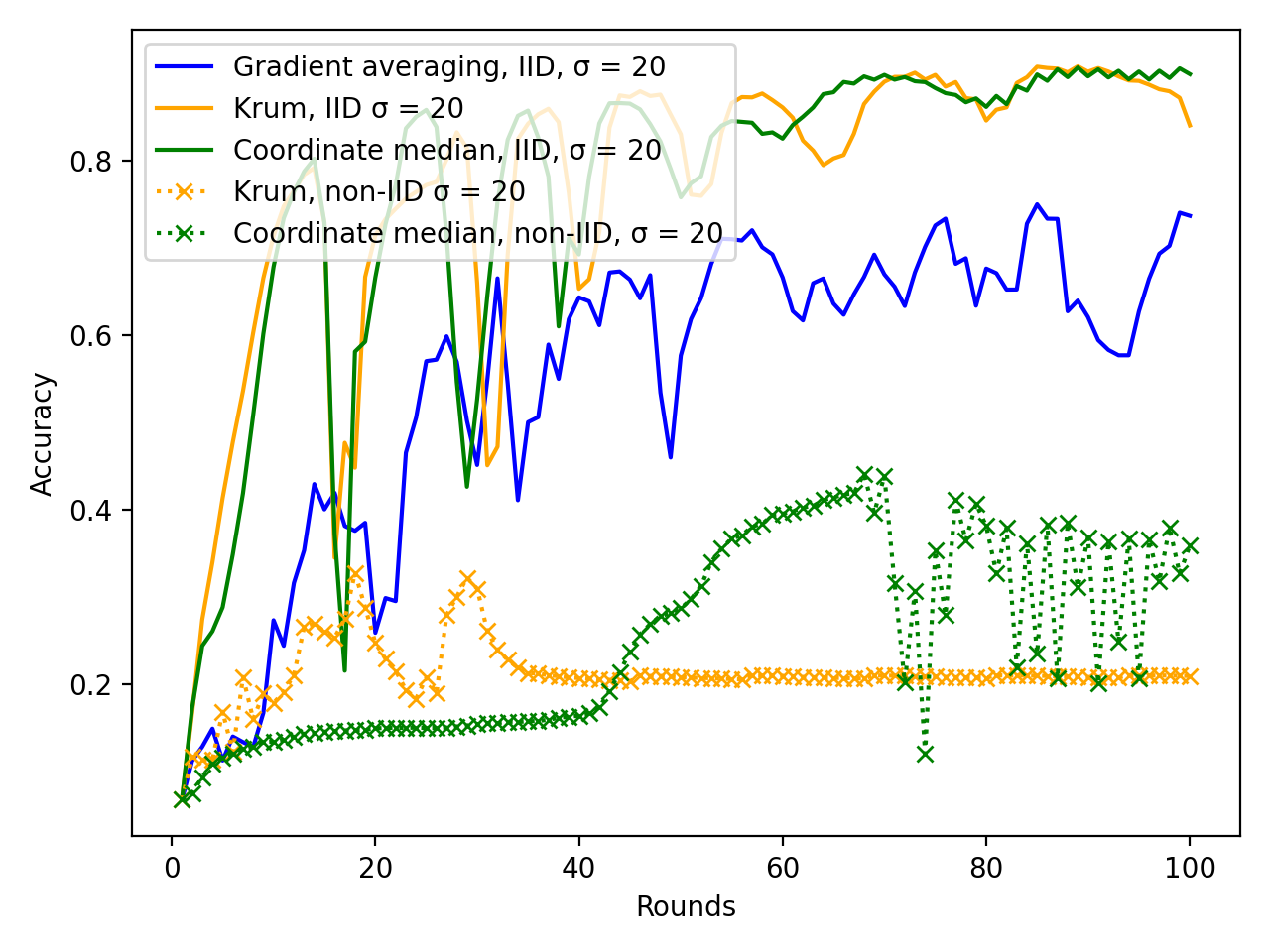}
  \caption{This experiment involves $25$ workers with $1,000$ data points randomly sampled from MNIST for IID setting and $1,000$ points containing only $1$ label per worker for non-IID setting, respectively. We compare the performance of gradient averaging, Krum and Coodinate median where $4$ Byzantine workers executing a Gaussian attack with $\mu = 0$ and $\sigma=20$. Here we use batch size of $50$ and learning rate of $0.05$.}
\label{fig:iidvsnoniid}
\end{figure}

So far we have shown empirically that several well-known robust aggregation algorithms cannot defend against Byzantine attacks especially in the non-IID setting. Next, we discuss some challenges involved in this setting.

\noindent{\bf Challenge One:} Can a robust aggregation algorithm smartly detect whether a worker sending ``different'' reply is a malicious worker or a benign worker with different local data distribution?

\noindent{\bf Challenge Two:} Can a robust aggregation algorithm train a global model with reasonable performance under non-IID local data distribution setting?

\noindent{\bf Challenge Three:} Can a robust aggregation algorithm utilizes all the information collected from workers to diagnosis the workers' behavior?

It is well-known that different layers of a neural network have different functionality, and may behave different towards different input data samples which could be used for mitigating the effect of attack. 
In the next section, we focus on investing how each layer of the neural network reacts to Byzantine threats in a federated learning system.


\section{A Pilot Study of Layer Robustness}\label{sec:study}

Many existing robust aggregation algorithms utilize some form of outlier detection to filter out Byzantine gradients based on the assumption that, in comparison to Byzantine gradients, honest gradients will be ``similar''.  
Theoretically, the relative similarity of gradients will decrease as the similarity of their local distributions decreases.
In this section, we observe the effect of Byzantine threats from a different aspect, i.e., observing how different layers of the neural network react to a Byzantine attack in the FL system. 
In Figure~\ref{fig:pilot}, we highlight some results that illustrate key observations from our study of the response of gradients per layer to Byzantine attacks.
In this study, we train a simple convolutional neural network (CNN) with two convolutional layers and dense layers, 
whose trainable layer weights and biases are listed in Table \ref{table:1}, 
in a FL system with ten workers. Each worker has $1,000$ data points randomly drawn from the MNIST dataset ~\cite{Lecun98gradient-basedlearning} so their local distributions are IID.  We therefore expect local gradients to be relatively similar. By using $\ell_2$ norms to measure the difference among them, we confirm this expectation and further study the similarity at a layer level and with the inclusion of Byzantine gradients. But first, we provide the notation used throughout this paper.

\begin{figure}[h!]
\centering
  \includegraphics[width=\columnwidth]{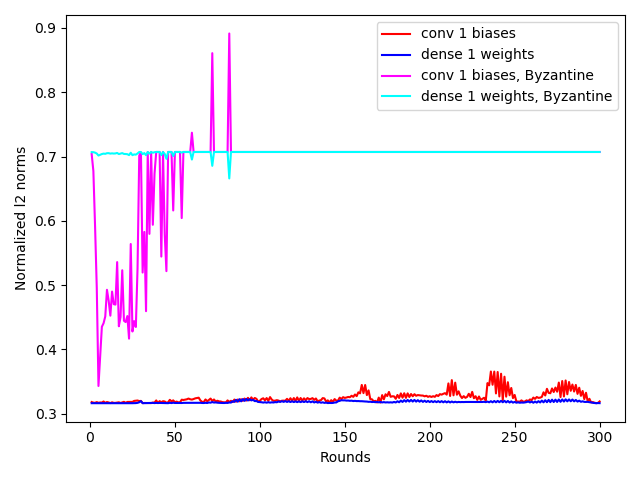}
  \caption{\footnotesize Normalized $\ell_2$ norms associated with the biases of the first convolutional layer and the weights of the first dense layer.  This experiment involves 10 workers with $1,000$ IID data points from MNIST.  We compare results with and without 2 Byzantine workers executing a Gaussian attack with $\mu = 0$ and $\sigma = 200$.}
\label{fig:pilot}
\end{figure}

\noindent{\bf Notation.}
We use $n$ to represent the total number of workers in a FL system.  To denote the $d$-dimensional gradients of worker $p$ for layer $l$ at global training round $t$, we use $g^t_{p,l}\in \mathbb{R}^d$. 
To denote the complete set of gradients of worker $p$ at round $t$, we use $G_p^{t}$.
Finally, we use $\mathcal{G}^t$ to represent a list of collected worker gradient vectors at round $t$.

\begin{table}[h!]
\centering
\begin{tabular}{|c c c c|} 
 \hline
 Order & Layer & Parameters & Dimension\\ [0.5ex] 
 \hline
 \multirow{2}{*}{1} & \multirow{2}{*}{conv 1} & weights & 288\\
 & & biases & 32\\
 \multirow{2}{*}{2} & \multirow{2}{*}{conv 2} & weights & 18432\\
 & & biases & 64\\
 \multirow{2}{*}{3} & \multirow{2}{*}{dense 1} & weights & 1179648\\
 & & biases & 128\\
 \multirow{2}{*}{4} & \multirow{2}{*}{dense 2} & weights & 1280\\
 & & biases & 10\\ [1ex] 
 \hline
\end{tabular}
\caption{List of trainable layers of a simple CNN.}
\label{table:1}
\end{table}

At round $t$, we compute the normalized $\ell_2$ norm of a specific layer, $P^t_l$ across all parties as follows: 
\begin{equation}\label{eq:normalized-l2}
  P^t_{l} \leftarrow \frac{\parallel [g^t_{1,l},g^t_{2,l},...,g^t_{n,l}]\parallel_2}{\sum_{p=1}^n x_p},
 \end{equation} 
where $x_p$ is the norm of the current gradient of worker $p$, i.e., $x_p = \parallel <g^t_{p,0}, g^t_{p,1}, \dots, g^t_{p,l}>\parallel_2$, and $[g^t_{1,l},g^t_{2,l},...,g^t_{n,l}]$ denotes a matrix with the $p$-th row being the $l$-th layer of worker $p$'s current gradient.
This is essentially measuring the variation across workers of the variation of their gradients, and the normalization (division by the $\ell_1$ norms of current gradients) eliminates the possible interference of the magnitude of gradient vectors on the comparison of gradients.  We compute these normalized $\ell_2$ norms across all ten workers at each round and compare the results between the layer weights and biases over time.  We run this experiment both with and without two out of ten workers being Byzantine.  These Byzantine workers execute a Gaussian attack by sending gradients that are randomly sampled from a Gaussian distribution, $N(\mu, I\sigma^2)$. We use the setup from~\cite{xie2018generalized} with the distribution $N(0, 200^2I)$.

Figure~\ref{fig:pilot} compares the results for two layer parameters' gradients under non-attack and Byzantine attack scenarios.  In absence of the Byzantine attack, the bias gradients from the first convolutional layer have similar $\ell_2$ norms to the weight gradients from the first dense layer until around round 150.  Then, the convolutional bias gradient $\ell_2$ norms become clearly larger and more varied across rounds than the weights of the first dense layer.  In the Byzantine case, the convolutional bias norms start to even more decisively exceed the dense weight norms even earlier in training.  While the variance of the dense $\ell_2$ norms across rounds is comparatively small in both settings, the variance in the convolutional layer norms is significantly amplified with the addition of Byzantine workers.  These patterns demonstrate the fact that the gradient variance imposed by the Byzantine workers is more drastically affecting the convolutional bias gradients than the dense weights.  They also have larger and more varied norms across rounds in the attack and non-attack settings separately.  Therefore, we conclude more generally that the layers whose gradient norms vary more across rounds have greater inherent vulnerability that is more intensely exploited by Byzantine attacks.

\section{Layerwise Gradient Aggregation}\label{sec:legato}
This section describes LayerwisE Gradient AggregatTiOn (LEGATO): an aggregation algorithm that utilizes the observations from the pilot study to reweigh gradients based on layer-specific robustness.  

\subsection{Threat Model}
LEGATO's goal is to be able to overcome erroneous or malicious gradients and it works under the following threat model. We assume that the server is honest and wants to detect malicious or erroneous gradients received during the training process.
Workers may be dishonest and try to provide gradient updates designed to evade detection. To ensure LEGATO is resilient against existing strenuous attacks such as \cite{xie2019fall}, we assume that malicious workers may collude and obtain access to gradients of other workers to perform those attacks. Under this threat model, we design and assess LEGATO. We now present our LEGATO algorithm.

\subsection{LEGATO}
We frame LEGATO, presented in Algorithm~\ref{alg:LEGATO}, in a vanilla FL pipeline (see Algorithm~\ref{alg:FL}). 
%
%
For each round of training, the novel steps of LEGATO start when the server receives gradients from all of the workers.  We solely consider a synchronous setting, meaning the server will wait until it receives responses from all queried workers.  The server keeps a log, with maximum size $m$, of the most recent past gradients from all workers.  At round $t$, the log is denoted by $\textit{GLog}:=[\mathcal{G}^{t-m}, \mathcal{G}^{t-m+1}, \dots, \mathcal{G}^{t-1}]$.

\begin{algorithm}[!t]
\caption{\textsc{Federated Learning with LEGATO.} Server trains the global model, $\mathcal{M}_{\text{global}}$, for a \textrm{MaxRound} rounds.  Worker $p\in \mathcal{P}$ owns its local dataset, $\mathcal{D}_{p}$.}
\label{alg:FL}
\SetAlgoLined
\textbf{Server executes:}

\Indp
\For{each training round $t$ in \textrm{MaxRound}}{
$\mathcal{G}^t \leftarrow$ new list\;
\For{each worker $p\in \mathcal{P}$}{
$G_{p} \leftarrow \textbf{GradientRequest}(\mathcal{M}_{\text{global}})$\;
add $G_{p}$ to $\mathcal{G}^t$\;}
\tcp{\footnotesize Aggregates gradients}
$\mathcal{G}^t_{agg} = \text{LEGATO}(\mathcal{G}^t)$\ \tcp{\footnotesize(Algorithm 2)}

Update $\mathcal{M}_{\text{global}}$ via gradient descent using $\mathcal{G}^t_{agg}$\;}

\Indm

\BlankLine

\textbf{Worker executes:}

\Indp
\textbf{GradientRequest}($\mathcal{M}_{\text{global}}$):

\Indp
$G_p \leftarrow \text{\bf GetGradient}(\mathcal{M}_{\text{global}},\ \mathcal{D}_{p})$\;
\Return{$G_p$}

\Indm

\Indm
\end{algorithm}

\begin{algorithm}[!t]
\caption{\textsc{LEGATO.} Algorithm to aggregate gradients at round $t$, given a list of current worker gradients, $\mathcal{G}^t$.  The server maintains a log of recent past worker gradients with maximum size $m$: $\textit{GLog}:=[\mathcal{G}^{t-m}, \mathcal{G}^{t-m+1}, \dots, \mathcal{G}^{t-1}]$, where $\mathcal{G}^t=[G^t_1, G^t_2, \dots, G^t_{|\mathcal{P}|}]$.}
\label{alg:LEGATO}
\SetAlgoLined
\textbf{Server executes:}

\Indp
\eIf{$t=1$}{
 Initialize an empty log \textit{GLog}\;}{
\textbf{UpdateGradientLog($GLog$, $\mathcal{G}^t$)}\;}
\For{$\mathcal{G}^i$ in \textit{GLog}\label{alg:ln:norm1}}{
\For{$p$ in $|\mathcal{P}|$}{$x_p = \parallel G^{i}_{p}\parallel_2$\;}
 \For{each layer $l$}{$P^i_{l} \leftarrow \frac{\parallel [g^{i}_{1,l},g^i_{2,l},...,g^i_{n,l}]\parallel_2}{\sum_{p\in \mathcal{P}}x_p}$\;}}\label{alg:ln:norm2}
\For{each layer l\label{alg:ln:factor1}}{$w_l \leftarrow Normalize(\frac{1}{\sqrt{Var(P^1_{l},...,P^t_{l})}})$\;}\label{alg:ln:factor2}
\For{$p$ in $|\mathcal{P}|$ \label{alg:ln:weight1}}{
 \For{each layer l}{$G^*_{p,l} \leftarrow {w_l}G^t_{p,l} + \frac{1 - {w_l}}{m-1}\sum_{j = 1}^{m-1} G^{t-j}_{p,l}$\;}}\label{alg:ln:weight2}
\Return{{$\mathcal{G}^t_{agg} \leftarrow \sum_{p\in \mathcal{P}} G^*_{p,l}$\;}}\label{alg:ln:avg}
\BlankLine
\textbf{UpdateGradientLog($\textit{GLog}$, $\mathcal{G}^t$):}

\Indp
 $\textit{GLog} \leftarrow \textit{GLog} + \mathcal{G}^t$\;
 \If{$len(\textit{GLog}) > m$}{$\textit{GLog} \leftarrow \textit{GLog}[1:]$}
\end{algorithm}
First, the server updates the gradient log, \textit{GLog}, to contain the most recent $m$ gradients collected from workers. 
In Lines~\ref{alg:ln:norm1}-\ref{alg:ln:norm2}, the server uses \textit{GLog} to compute the layer-specific normalized $\ell_2$ norms $P^i_l$ in the same way as \eqref{eq:normalized-l2}.  It then uses the reciprocal of the standard deviation of these norms across all logged rounds as a robustness factor that is assigned to each layer and normalized across all layers (Lines~\ref{alg:ln:factor1}-\ref{alg:ln:factor2}).  It is applying the observation from the pilot study that the less the $\ell_2$ norms vary across rounds, the more robust a layer is.  In Lines~\ref{alg:ln:weight1}-\ref{alg:ln:weight2}, each worker's gradients are reconfigured as a weighted sum of the average of the worker's logged gradients and its current gradient vector.  The weights are chosen as a function of the robustness factor per layer that allows the updates of less robust layers to relies more heavily on the average of past gradients.  Finally, in Line~\ref{alg:ln:avg}, all of these reweighed gradients are averaged across all workers and the result is used as round $t$'s aggregated gradient, $\mathcal{G}^t_{agg}$.  This reweighing strategy is ultimately dampening the noise across gradients that may be a result of a Byzantine attack, with the goal of reversing the Byzantine gradients' effect of pulling the aggregated gradient away from an honest value.  This allows LEGATO to mitigate the attack without needing to identify and filter out attackers.  The fact that we focus this dampening at the most vulnerable layers allows reliable layers' current gradients, which are most accurate, to still weigh heavily into the aggregated gradient, thus limiting the sacrifice of convergence speed that could result from using past gradients.
Furthermore, the online robustness factor computation allows LEGATO to generalize to a variety of model architectures because it adopts online factor assignments rather than an absolute quantification of robustness.  As is evidenced in~\cite{zhang2019layers}, the knowledge of layer-specific robustness varies between architectures so an online and model-agnostic approach is ideal.


\subsection{Complexity analysis of LEGATO}
The most efficient gradient aggregation algorithms have computational complexity that is linear in terms of the number of workers $n$ in a FL system.  Among them are conventional, non-robust algorithms like gradient averaging and Byzantine-tolerant algorithms that utilize robust statistics, for example, coordinate median.  Proposition 1 states that LEGATO has time complexity $\mathcal{O}(dn+d)$, meaning it is also linear in terms of $n$.  This is a crucial improvement that LEGATO has over state-of-the-art robust algorithms such as Krum and Bulyan~\cite{mhamdi2018hidden} whose time complexities are quadratic in $n$.  LEGATO's maintenance of a log introduces a space requirement of $\mathcal{O}(dnm)$, which is formalized in Proposition 2.
\begin{proposition}
LEGATO has time complexity $\mathcal{O}(dn + d)$.
\end{proposition}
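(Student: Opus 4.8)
\noindent\emph{Proof plan.} The approach is a line-by-line cost accounting of Algorithm~\ref{alg:LEGATO}, treating the log size $m$ and the number of trainable layers $L$ as architecture-dependent constants (independent of the number of workers $n$ and the total gradient dimension $d$), and repeatedly using the identity $\sum_{l} d_l = d$, where $d_l$ is the dimension of layer $l$. Summing the per-block bounds should collapse to the claimed $\mathcal{O}(dn+d)$.

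First I would bound \textbf{UpdateGradientLog}: appending $\mathcal{G}^t$ and possibly discarding the oldest snapshot touches $\mathcal{O}(dn)$ scalars, hence $\mathcal{O}(dn)$. Next, the loop in Lines~\ref{alg:ln:norm1}--\ref{alg:ln:norm2} runs over the at most $m$ snapshots in \textit{GLog}; for a fixed snapshot $\mathcal{G}^i$, each $x_p = \|G^i_p\|_2$ costs $\mathcal{O}(d)$ so all $n$ of them cost $\mathcal{O}(dn)$, the denominator $\sum_p x_p$ is then $\mathcal{O}(n)$, and for each layer $l$ the numerator $\|[g^i_{1,l},\dots,g^i_{n,l}]\|_2$ costs $\mathcal{O}(n d_l)$, which sums to $\mathcal{O}(n\sum_l d_l)=\mathcal{O}(dn)$ over all layers; multiplying by the constant $m$ leaves this block at $\mathcal{O}(dn)$.

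Then I would handle the robustness-factor block (Lines~\ref{alg:ln:factor1}--\ref{alg:ln:factor2}): for each of the $L$ layers, $\mathrm{Var}(\cdot)$ over the $\le m$ logged values of $P^{\cdot}_{l}$ is $\mathcal{O}(m)$, and the final $\textit{Normalize}$ across layers is $\mathcal{O}(L)$, giving $\mathcal{O}(mL+L)=\mathcal{O}(L)\subseteq\mathcal{O}(d)$ since $L\le d$; this step is the source of the ``$+d$'' summand, and I would stress that it does not scale with $n$. For the reweighing block (Lines~\ref{alg:ln:weight1}--\ref{alg:ln:weight2}), for each worker $p$ and layer $l$ the running sum $\sum_{j=1}^{m-1} G^{t-j}_{p,l}$ over $m-1$ vectors of dimension $d_l$ together with the two scalar multiplications costs $\mathcal{O}(m d_l)$, so $\mathcal{O}(md)$ per worker and $\mathcal{O}(mdn)=\mathcal{O}(dn)$ overall. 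Finally, the aggregation in Line~\ref{alg:ln:avg} sums $n$ reassembled gradient vectors of total dimension $d$ in $\mathcal{O}(dn)$. Adding the blocks yields $\mathcal{O}(dn)+\mathcal{O}(d)+\mathcal{O}(dn)+\mathcal{O}(dn)=\mathcal{O}(dn+d)$.

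There is no genuine obstacle here; the only place care is needed is the bookkeeping for the per-layer inner loops, namely verifying that they telescope via $\sum_l d_l = d$ and that $m$ and $L$ are legitimately constants (fixed by the model and the chosen log length, not by the worker count) so they may be suppressed inside the big-$\mathcal{O}$. If one prefers to keep $m$ explicit, the identical argument gives $\mathcal{O}(dnm+d)$, which is consistent with the $\mathcal{O}(dnm)$ space bound stated in Proposition~2.
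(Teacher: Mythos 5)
Your proposal is correct and follows essentially the same route as the paper's own proof: a block-by-block cost accounting of Algorithm~\ref{alg:LEGATO}, with the key observation that iterating over layers telescopes via $\sum_l d_l = d$ and that the log size $m$ is a constant absorbed into the big-$\mathcal{O}$. Your version is somewhat more explicit about where the $m$ factor is suppressed (and the resulting $\mathcal{O}(dnm+d)$ if it is kept), but this is a refinement of the same argument, not a different one.
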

\begin{proof}
First, for each logged round at each layer, the server computes the $\ell_2$ norms of each worker's gradients($\mathcal{O}(dn)$).  Then, for each logged round at each layer, the server computes the $\ell_2$ norm of the matrix of worker gradients and normalizes by dividing by the $\ell_1$ norm of all workers' $\ell_2$ gradient norms ($\mathcal{O}(dn)$).  Then it computes the standard deviation across these normalized round $\ell_2$ norms for each layer ($\mathcal{O}(dn)$).  Next, the server normalizes the reciprocals of the standard deviations corresponding to each layer ($\mathcal{O}(d)$).  It then computes the average of logged gradients from each worker at each layer ($\mathcal{O}(dn)$).  Lastly, the gradients are reweighed by worker and averaged across all workers ($\mathcal{O}(dn)$).

Note that iterating through each individual layer of gradients does not add a factor of $l$ to the time complexity because $d$ encompasses all gradients at all layers.  Therefore, iterating through all gradients by layer is $\mathcal{O}(d)$ and iterating through all gradients in a flattened vector is also $\mathcal{O}(d)$.
\end{proof}

\begin{proposition}
LEGATO has space complexity $\mathcal{O}(dnm)$
\begin{proof}
The log stores $m$ past gradients from each worker.
\end{proof}
\end{proposition}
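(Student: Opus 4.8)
The plan is to identify the single data structure whose size dominates the server's memory footprint, namely the gradient log $\textit{GLog}$, and then verify that every other quantity LEGATO materializes is asymptotically no larger. First I would observe that, by construction, $\textit{GLog}$ retains exactly the $m$ most recent rounds of updates: the \textbf{UpdateGradientLog} subroutine appends $\mathcal{G}^t$ and then truncates back to length $m$. Each logged round $\mathcal{G}^i$ consists of one full update $G^i_p$ per worker $p \in \mathcal{P}$, and each such update, once its layer blocks are concatenated, is a vector in $\mathbb{R}^d$. Multiplying these three independent quantities gives $\Theta(dnm)$ scalars of storage, which is the claimed bound; I would also note that the matching lower bound is immediate, since the log literally holds that many numbers, so $\mathcal{O}(dnm)$ is tight rather than a loose overestimate.

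The second step is a short bookkeeping pass over the remaining intermediate objects in Algorithm~\ref{alg:LEGATO} to confirm none of them dominates. The per-round, per-layer normalized norms $P^i_l$ form a table whose size is (number of logged rounds) $\times$ (number of layers), which is at most $m$ times the layer count; since the number of trainable layers never exceeds the total parameter count $d$, this table is $\mathcal{O}(dm) \subseteq \mathcal{O}(dnm)$. The robustness weights $w_l$ occupy $\mathcal{O}(d)$ space; the per-worker averages of logged gradients and the reweighed gradients $G^*_{p,l}$ each occupy $\mathcal{O}(dn)$ space, being one $d$-dimensional vector per worker; and the returned aggregate is a single $d$-dimensional vector. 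Every one of these is dominated by $\mathcal{O}(dnm)$, so the server's total space is $\mathcal{O}(dnm)$.

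I do not expect a genuine obstacle here — the statement is essentially a definition-chase — but the one point to handle with care is the interface between the ``per-layer'' view used inside the algorithm and the ``flattened $d$-dimensional'' view used in the statement: as in the proof of Proposition~1, one must observe that iterating over layers does not introduce an extra factor of the layer count, because $d$ already aggregates the dimensions of all layers. With that observation recorded, the accounting above is routine and the bound $\mathcal{O}(dnm)$ follows. As a closing remark I would add that when the log size $m$ is treated as a constant, this reduces to $\mathcal{O}(dn)$, matching the space usage of conventional averaging up to a constant factor.
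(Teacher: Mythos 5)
Your proof is correct and takes the same approach as the paper's one-line argument: the gradient log, holding $m$ rounds of $n$ workers' $d$-dimensional gradients, dominates the server's memory and gives $\mathcal{O}(dnm)$. Your additional bookkeeping pass over the intermediate quantities ($P^i_l$, $w_l$, $G^*_{p,l}$) is a sound but routine elaboration of what the paper leaves implicit.
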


\section{Evaluation}\label{sec:experiments}

In a variety of attack settings and non-attack settings, we evaluated
LEGATO's performance against those of \textit{gradient averaging}~\cite{abadi2016tensorflow}, which is a simple average of all local gradients that serves as a baseline, and
two state-of-the-art robust aggregation algorithms:
\textit{Krum}\cite{blanchard2017byzantinetolerant} and \textit{coordinate median}\cite{yin2018byzantinerobust}.

We empirically support the following claims:
\begin{itemize}
    \item In settings without strictly bounded honest gradient variance, which we demonstrate through experiments with and without IID data, LEGATO is more robust than Krum and coordinate median.
    \item Considering all attack settings, LEGATO is the most generally robust of the three algorithms.
    \item LEGATO has the best performance of the three algorithms in the absence of a Byzantine attack.
\end{itemize}

\begin{figure}[h!]
\begin{subfigure}{\columnwidth}
  \centering
  \includegraphics[width=\columnwidth]{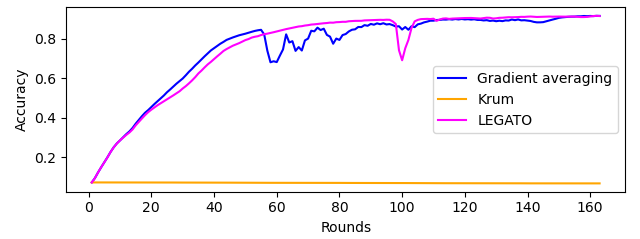}
  \caption{{\footnotesize MNIST dataset, each worker has 1k IID training images, the batch size is $50$, and the learning rate is $.05$.}}
  \end{subfigure}
\begin{subfigure}{\columnwidth}
  \centering
  \includegraphics[width=\columnwidth]{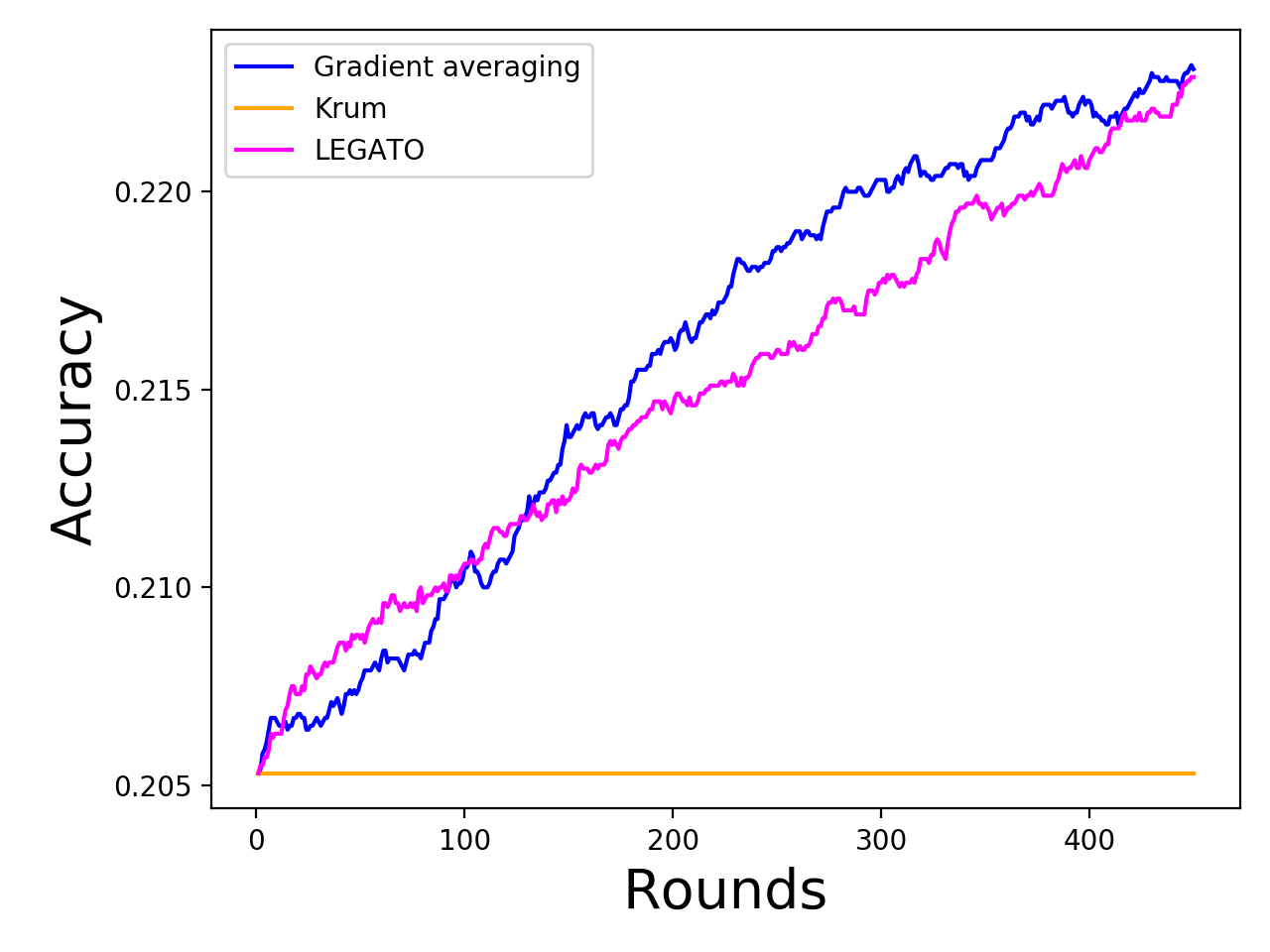}
  \caption{{\footnotesize CIFAR10 dataset, each worker has 2K IID training images, the batch size is 1k, and the learning rate is $.01$.}}
\end{subfigure}
  \caption{\footnotesize 
  Accuracy for MNIST (\textit{top}) and CIFAR10 (\textit{bottom}) under \textit{Fall of Empires attack}, where $11$ out of $25$ workers execute the attack with $\epsilon = .001$.  
  }
\label{fig:foe}
\end{figure}

\subsection{Experimental Setup}
We implemented LEGATO using the IBM Federated Learning library \cite{ibmfl2020}.
All experiments in this section train a global model to classify MNIST handwritten digits or to classify CIFAR10 images in a FL system with $25$ workers. Each worker has training data points that are randomly sampled either across the entire dataset (IID) or across only one class each (non-IID setup taken from ~\cite{zhao2018federated}) which will be specified later. 
The workers compute the gradient based on its local dataset before responding to the server's request at every global training round.  
The global model for MNIST is a simple CNN with two convolutional layers, one dropout layer, and two dense layers.  For CIFAR10, it is a CNN with three convolutional layers followed by three dense layers. We use Adadelta optimization and past gradient log size $10$.  We use accuracy across global training rounds as a performance metric, which is comparable to F1 score in these experiments.

We assess LEGATO's performance against two types of Byzantine attacks: the Fall of Empires attack~\cite{xie2019fall} and the Gaussian attack.
\subsubsection{Fall of Empires Attack}
The Fall of Empires attack~\cite{xie2019fall} is specifically designed to break Krum and coordinate median. Byzantine workers require knowledge of the $h$ honest gradients, $v_i$, and send gradients, $u_j$, formulated as:\\
$u_1 = u_2 = ... = u_{n-h} = -\frac{\epsilon}{h}\sum_{i=1}^h v_i
$,
where $\epsilon$ is a real, non-negative factor.
\subsubsection{Gaussian Attack}
For the Gaussian attack, Byzantine workers randomly sample gradients from a Gaussian distribution, $N(\mu, \sigma^2I)$.  In our experiments, we set $\mu=0$, and vary the selection of $\sigma$.

\begin{figure}
  \begin{subfigure}{\columnwidth}
  \centering
  \includegraphics[width=\columnwidth]{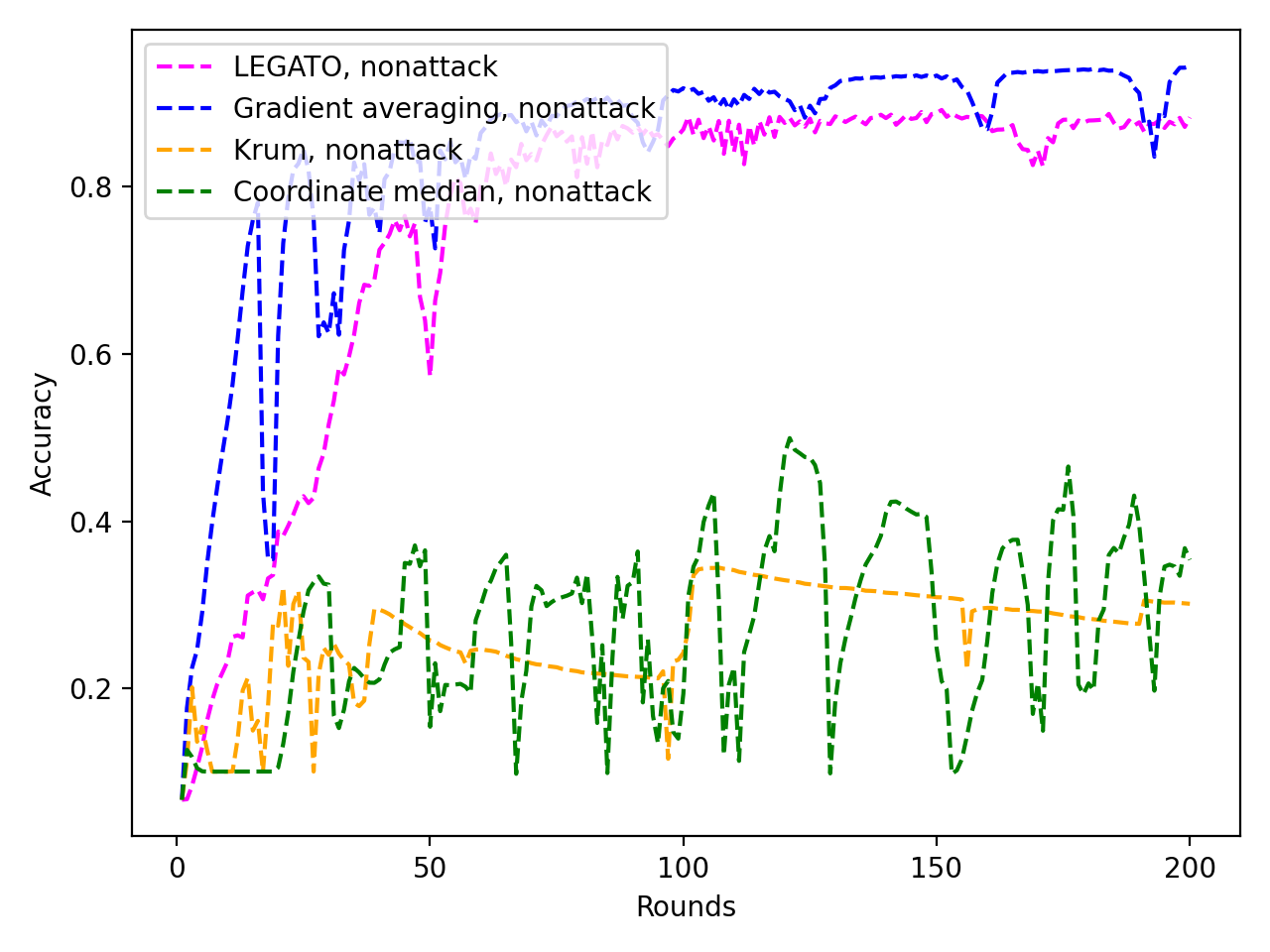}
  \caption{Non-attack setting.}
  \end{subfigure}
  \begin{subfigure}{\columnwidth}
  \centering
  \includegraphics[width=\columnwidth]{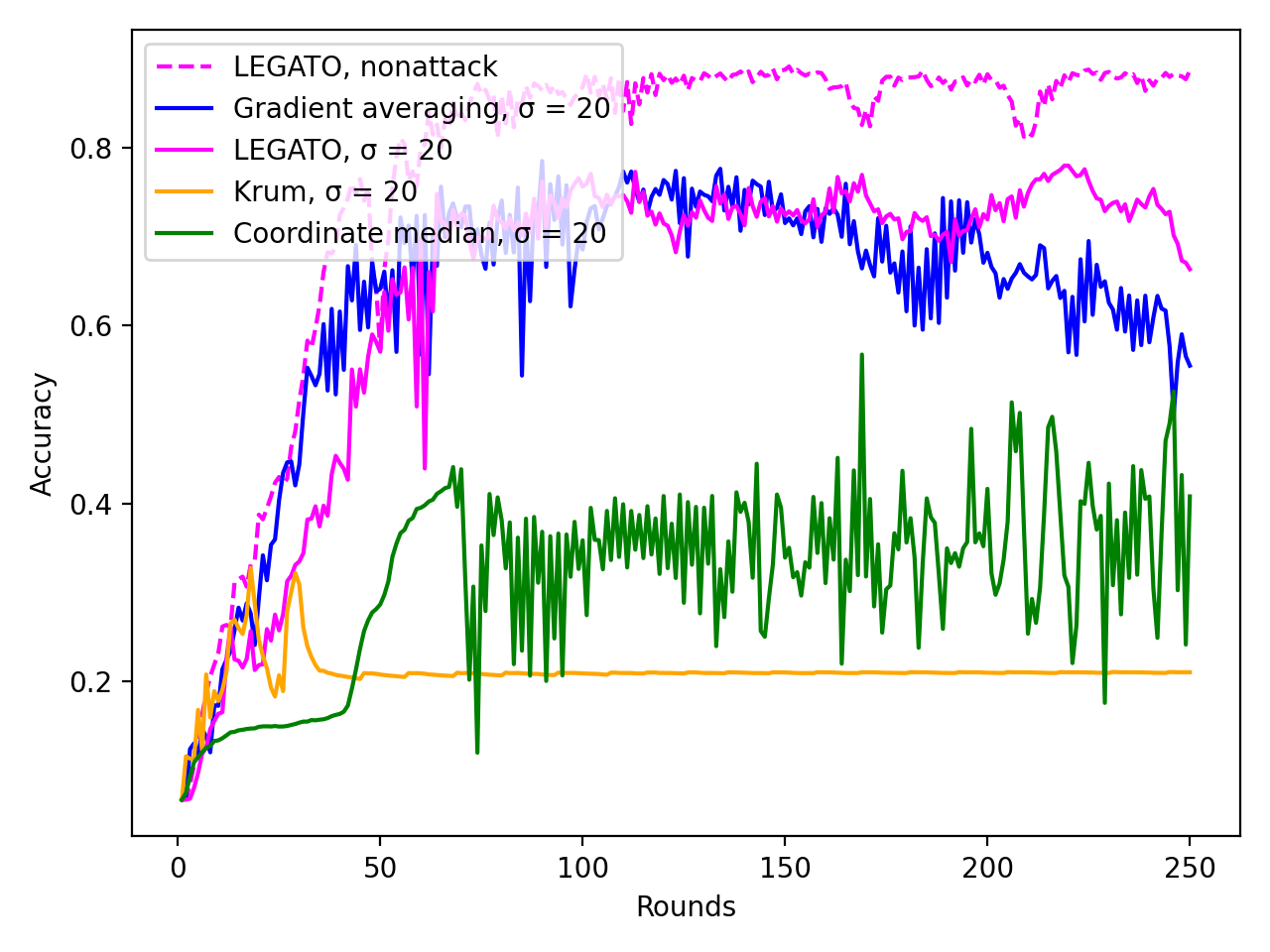}
  \caption{Gaussian attack setting with $\sigma=20$.}
  \end{subfigure}
  \begin{subfigure}{\columnwidth}
  \centering
  \includegraphics[width=\columnwidth]{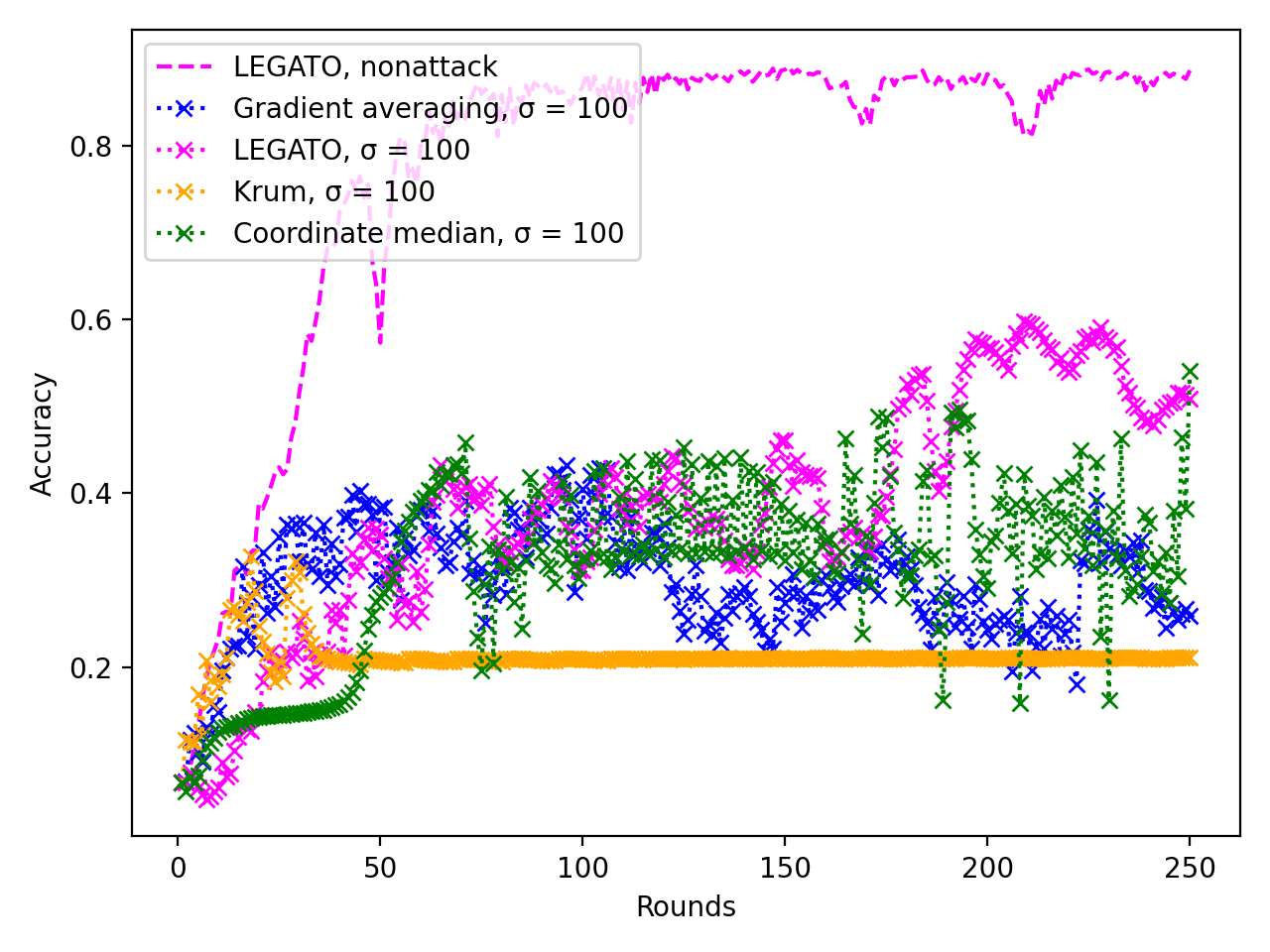}
  \caption{Gaussian attack setting with $\sigma=100$.}
  \end{subfigure}
  
 \caption{Accuracy for \textit{non-attack setting} (\textit{top}) and under Gaussian attack (\textit{bottom}), where $4$ out of the $25$ workers are Byzantine and randomly sample gradients from $N(0, \sigma^2I)$. 
 For all cases, the workers have non-IID data distribution, the batch size is $50$ and the learning rate is $.03$.}
\label{fig:noniid}
\end{figure}

\subsection{Comparison of LEGATO against Krum and coordinate median}\label{sec:krum}



We first study the case when there is an attack presented. Figure~\ref{fig:foe} shows results for algorithms against the Fall of Empires attack for training on MNIST (top) and CIFAR10 (bottom). As it can be seen in the figure, the Fall of Empires attack is effective against Krum, but not against LEGATO or gradient averaging.
With this attack, all Byzantine gradients are the same and hence form a large cluster.  Krum identifies these Byzantine worker gradients as being more trustworthy and will use them to update the global model.
This takes advantage of Krum's reliance on the assumption that honest gradients are sufficiently clumped ~\cite{elmhamdi2020distributed} and similar results are shown on coordinate median.

This assumption of bounded honest gradient variance is especially unrealistic for FL systems with non-IID worker local data distributions.  
Therefore, when local workers' data distribution is non-IID, Figure~\ref{fig:noniid} shows that Krum and coordinate median performs poorly regardless if there is a Gaussian attack presented or not.
In fact, they cannot distinguish between Byzantine gradients and the benign gradients from workers owning different local distributions. 
Our experimental results demonstrate that LEGATO is independent of this assumption.
In particular, LEGATO achieves consistently better accuracy comparing to gradient averaging for all attack settings. 
When the Gaussian attack uses a small variance, $\sigma=20$, LEGATO achieves a significant improvement in model accuracy compared to Krum and coordinate median.  
When $\sigma=100$, the accuracy of LEGATO is better than Krum and slightly exceeds coordinate median after $175$ rounds because the high standard deviation exploits a vulnerability LEGATO has to extreme outliers.

\subsection{Comparison of LEGATO Against Robustness of Gradient Averaging}\label{sec:grad_avg}
\begin{figure}[h!]
  \centering
  \includegraphics[width=\columnwidth]{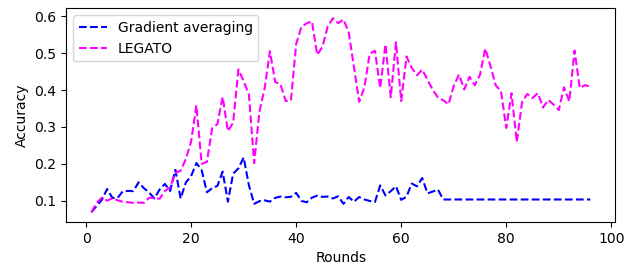}
  \caption{\footnotesize Comparing the accuracy of gradient averaging and LEGATO for training a CNN with 8 convolutional model layers instead of the previous architecture with 2 convolutional layers.  The minibatch size is $50$ and the learning rate is $.05$, and $4$ of $25$ workers are Byzantine workers executing a Gaussian attack where gradients are randomly sampled from $N(0, 20^2I)$.}
\label{fig:overparameterized}
\end{figure}

Figure~\ref{fig:overparameterized} shows that the performances of LEGATO and gradient averaging drop and the performance gap widens when more convolutional layers are added to the model architecture.  This is increasing the overparamterization of the model, which is a condition worth studying due to its common existence in practice.  As the number of parameters increases, the model becomes more susceptible to learning the Gaussian noise in the same way that it would become more susceptible to learning noise in training data and hence overfitting in non-attack settings.  LEGATO's dampening of gradient oscillations is therefore resembling the effect of regularization methods for reducing overfitting.

\subsection{Analysis of Layer Robustness Factor Results}\label{sec:factors}

We also compare how different neural network layers behave when the training process is being attack versus in a non-attack setting.
In this experiment, we use IID worker data distribution; in the attack scenario $4$ out of $25$ workers launched Gaussian attack with $\sigma=20$. 

\begin{figure}[h!]
  \centering
  \includegraphics[width=\columnwidth]{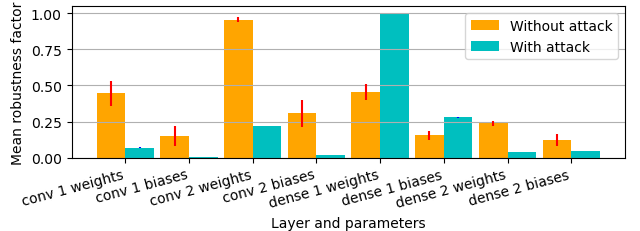}
  \caption{\footnotesize Mean robustness factors assigned to each layer by LEGATO across rounds 20-80 of training in the  Gaussian attack with $\sigma=20$ setting  and the non-attack setting.  Error bars indicate the variance of factors.}
\label{fig:factors}
\vspace{-.1in}
\end{figure}

\begin{figure}
  \centering
  \begin{subfigure}{\columnwidth}
  \centering
  \includegraphics[width=\linewidth]{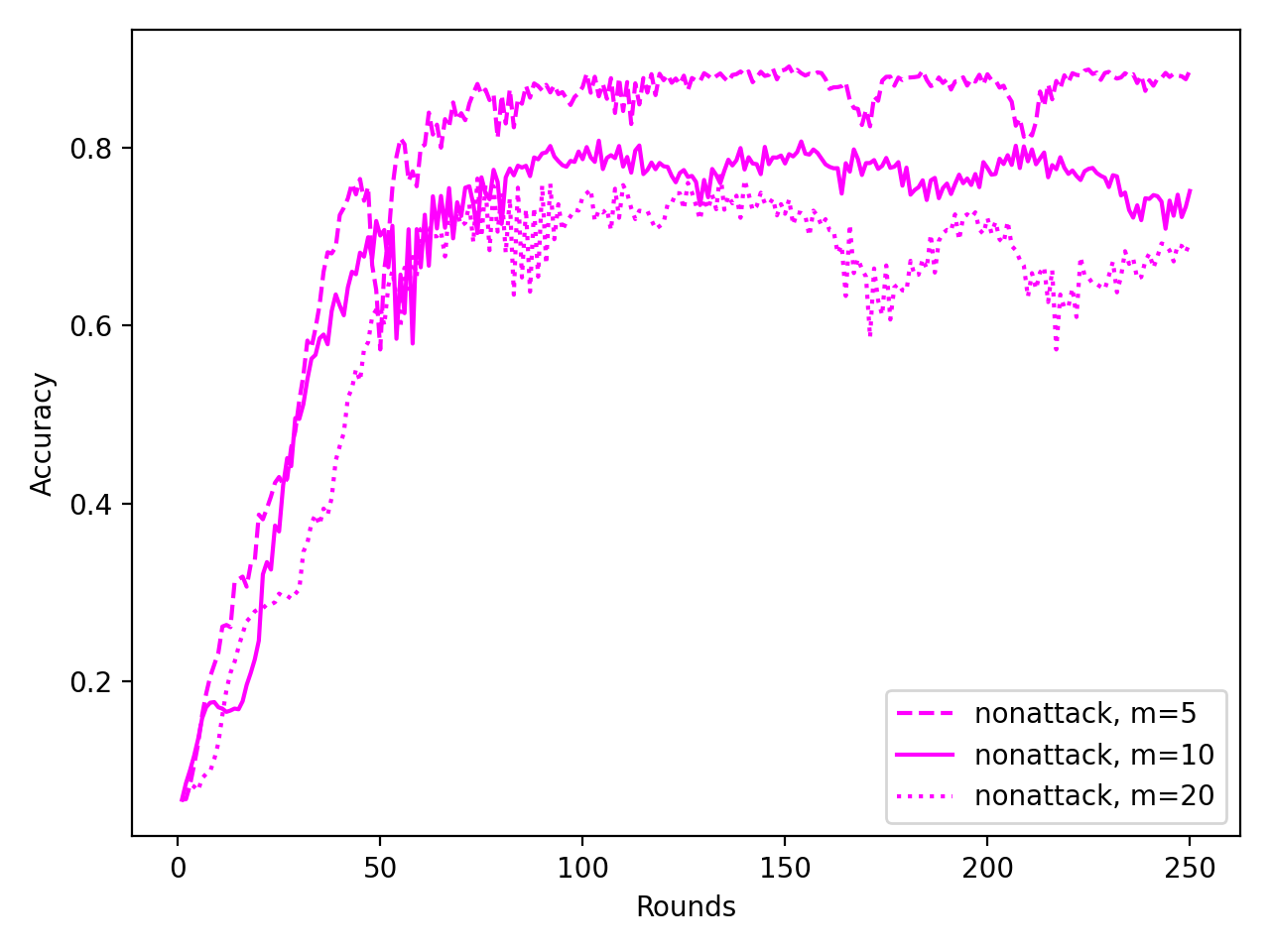}
  \subcaption{Non-attack setting.}
  \end{subfigure}
  \begin{subfigure}{\columnwidth}
  \includegraphics[width=\linewidth]{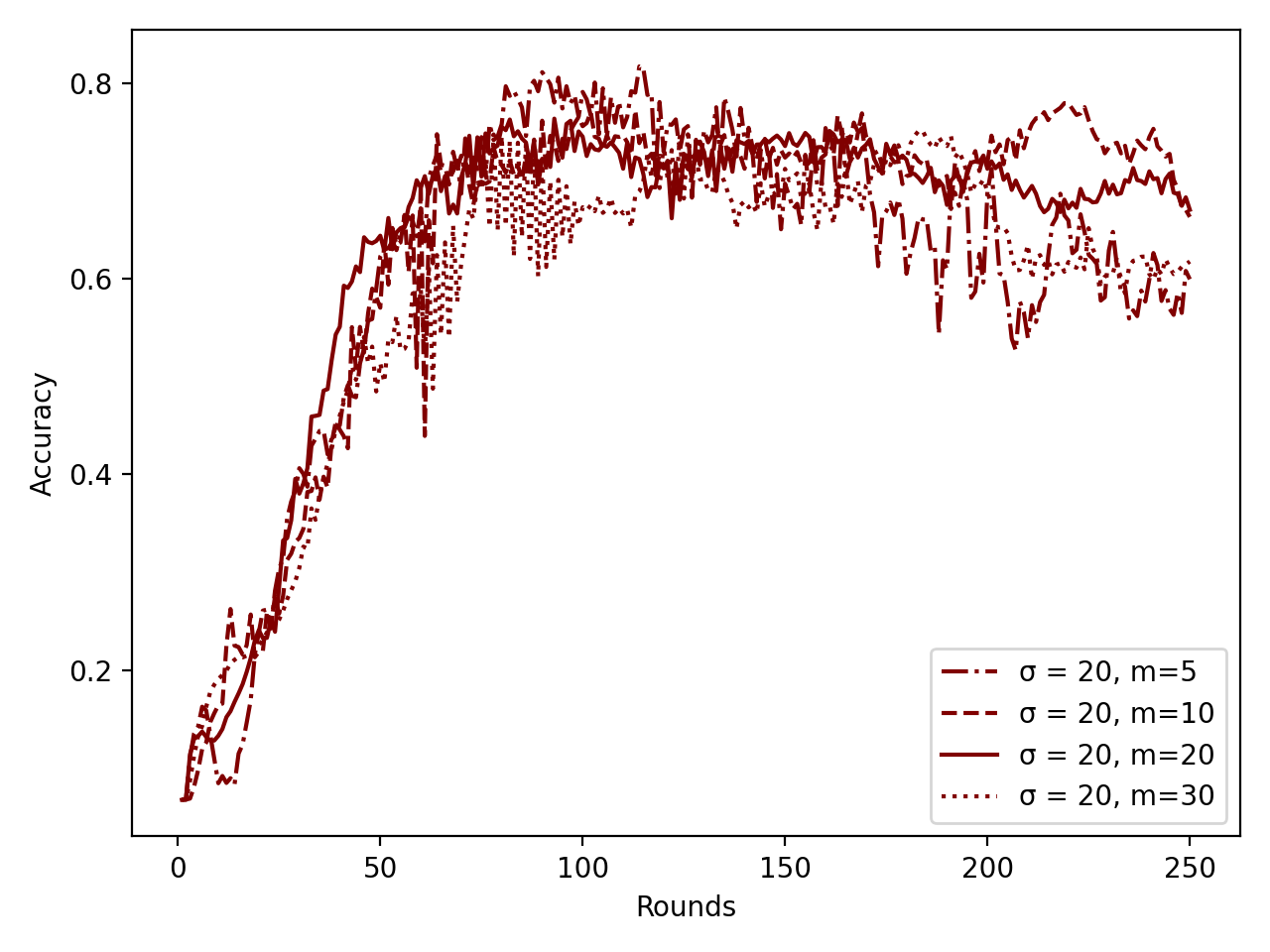}
  \subcaption{Gaussian attack, $\sigma=20$.}
  \end{subfigure}
  \begin{subfigure}{\columnwidth}
  \includegraphics[width=\linewidth]{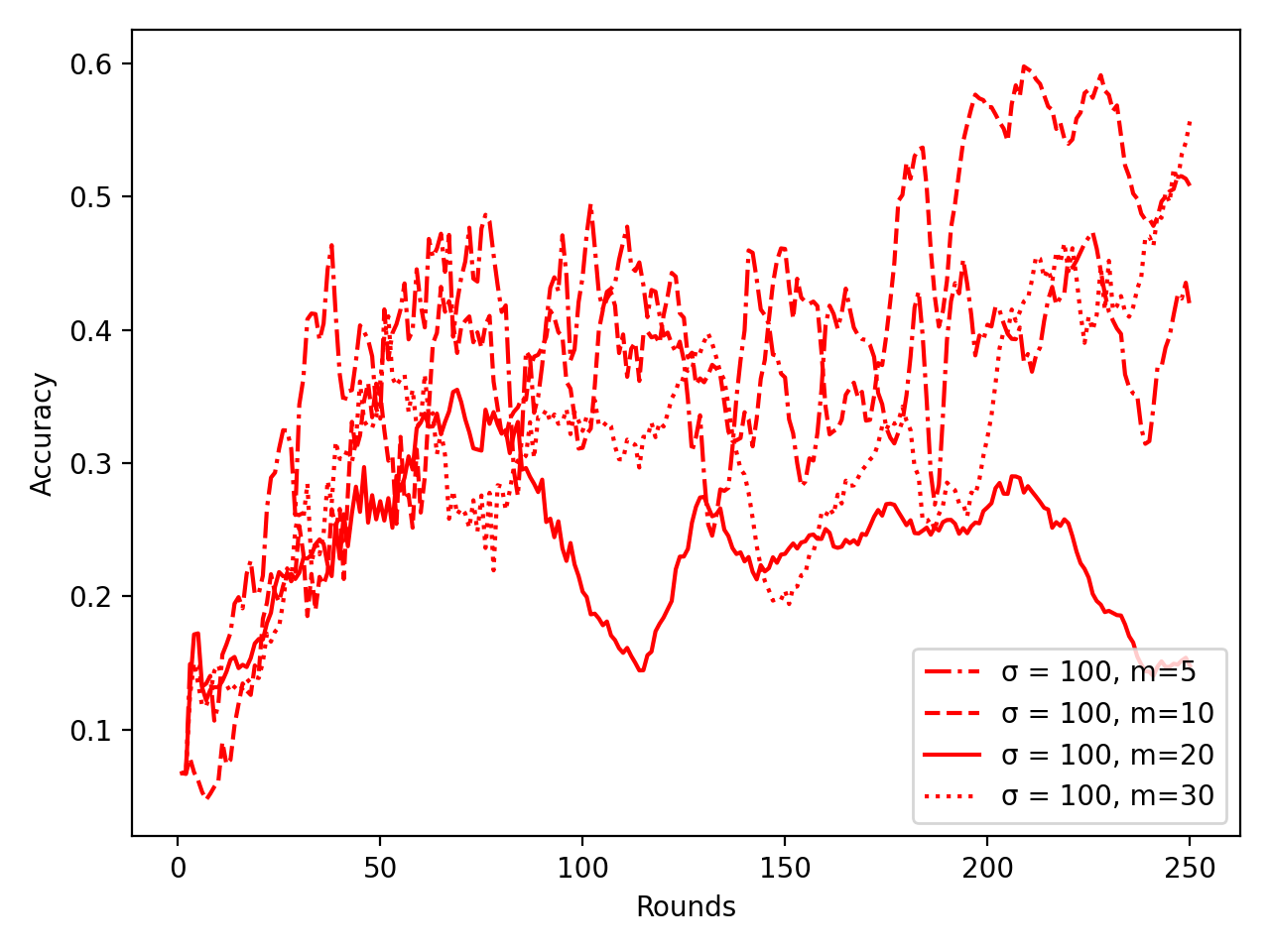}
  \subcaption{Gaussian attack, $\sigma=100$.}
  \end{subfigure}
  \caption{MNIST dataset, non-IID setting, where each worker has $1,000$ training images. 
  For Gaussian attacks, $4$ out of the $25$ workers are Byzantine and randomly sample gradients from $N(0, \sigma*I)$. 
  For all cases, the batch size is $50$ and the learning rate is $0.03$. We vary the log size from $5$ to $30$.}
  \label{fig:noniid_log}
\end{figure}

Figure~\ref{fig:factors} shows that in both settings, weights layers are more robust than their corresponding bias layers.
This pattern is more decisive in the Byzantine setting and the smaller variances associated with the factors from the Byzantine setting than the non-attack setting supports this claim.  A second pattern emerges with the Byzantine factors that shows later layers being generally more robust than earlier layers.  These two patterns are consistent with the observations from the pilot study.  They also align with general knowledge of layer gradient behavior in gradient descent.  Namely, that gradients vary more in the bias direction than in the weights direction and back-propagation more aggressively modifies earlier layers than later layers.  This again supports the idea introduced in the pilot study that Byzantine workers exploit the inherent vulnerability of each layer.

It is worth noting that, intuitively, layers with more parameters will be less robust due to their greater susceptibility to learning the noisy Byzantine gradients.  However, the $\ell_2$ variance-based quantification of robustness does not result in a ranking of layers that matches the ranking of their trainable parameter counts since conv1 has less trainable parameters than conv2.
This implies that while an overall increase in model parameters increases a model's overall susceptibility to noise, it is disproportionately concentrated in particular nodes or layers, irrespective of the layer dimensions.


\subsection{Effectiveness of the log size}
Finally we examine the effect of the log size for the LEGATO algorithm, in particular, how does the log size affect the LEGATO's ability to defense against Byzatine threats.

From Figure~\ref{fig:noniid_log}, we can see that the choice of log size does not affect the model's performance a lot in the non-attack and Gaussian attack with small variance cases, but in the case of Gaussian attack with high variance, the selection of log size is quite crucial. 
In fact, it is reasonable that when log size is as small as five, the performance of LEGATO will lean towards gradient averaging, while as the log size increases, the LEGATO algorithm becomes more conservative to make a move towards the new gradient direction. Therefore, when the log size is as large as $30$, it will converge very slowly and even oscillate around certain point for a long time.

\section{Discussion}\label{sec:remark}
In this section, we discuss two limitations of LEGATO. First, LEGATO is limited to only neural networks. Second, the variance of the Gaussian attack can affect the effectiveness of LEGATO.

 Although LEGATO is a first successful attempt to mitigate Byzantine threats for training neural networks in federated learning, it is not a general algorithm that can be applied to train all types of machine learning models in FL setting. This is because the novelty of LEGATO is exploiting the characteristic of neural networks, i.e., utilizing the reaction of neural network layer architecture to identify robust layers and hence relying more heavily on those robust layers. 
However, other models like linear models, Support Vector Machines (SVMs) and XGBoost among others, do not possess such architecture. 
Investigating if there exists any general robust aggregation algorithm that can be applied to mitigating Byzantine threats even under the non-IID local data distribution setting is an open research problem.

Moreover, as we can see in the case when the variance of the Gaussian attack is very high, the effectiveness of LEGATO is affected, since it does not reject any gradient information and hence is vulnerable to extreme outliers. However, currently there is no good way to define and identify ``extreme outliers'' according to the problem context.  In particular, there is no procedure to define and identify those ``extreme outliers'' in a data-aware fasion, i.e., based on local data distributions from all the workers.

\section{Related Work}\label{sec:related_art}

Many existing methods for Byzantine attack mitigation can be classified as either robust statistics or anomaly detection.
Some Byzantine robust algorithms employ inherently robust statistics such as coordinate median, trimmed mean \cite{yin2018byzantinerobust}, and their variants \cite{xie2018generalized,krishnaswamy2017constant,chen2017distributed,charikar2016learning,pillutla2019robust}
.  Anomaly detection approaches include \cite{muozgonzlez2019byzantinerobust,ijcai2019-670,xie2018zeno,fung2018mitigating,DBLP:journals/corr/abs-1907-12205,blanchard2017byzantinetolerant,mhamdi2018hidden}.
They filter out gradients assumed Byzantine by some form of outlier detection under the assumption that a local gradient vector is dishonest if it is comparatively more distant than other local vectors are to each other.  Krum~\cite{blanchard2017byzantinetolerant} is a state-of-the-art robust aggregation algorithm that evades Byzantine attacks by having the server only use a single worker's gradients to update the global model.  This worker is chosen at every round as the most trustworthy by having local gradients that are most similar to all other local gradients in terms of $\ell_2$ norms.  MultiKrum~\cite{blanchard2017byzantinetolerant} and Bulyan~\cite{mhamdi2018hidden} use Krum to choose a small set of worker gradients to aggregate.  Since these filtering methods reject many potentially honest gradients, they often sacrifice performance in non-attack settings.  
Additionally, Krum and its variants have quadratic computational complexity. One weakness of all aforementioned solutions is their required bounds on the variance of honest gradients~\cite{elmhamdi2020distributed}, which has been shown unknown in practice and can be exploited through attack strategies such as the Fall of Empires attack~\cite{xie2019fall}.  These variance bounds are especially uncharacteristic of settings where local data is non-IID, which is often the case in FL because workers will collect their training data samples from different data sources.  In general, the use of non-IID local data hinders the global model's performance and has motivated a new line of research for this problem, e.g., \cite{li2018federated,gao2019privacy,wang2020federated} and the references therein.  Other robust aggregation algorithms include methods that use algorithmic redundancy\cite{chen2018draco, data2019data, yu2018lagrange, DBLP:journals/corr/abs-1907-12205}and robust clustering methods\cite{ghosh2019robust, gupta2017kmeansoutliers, 10.5555/1347082.1347173}.

Some algorithms have a similar variance-reduction goal as LEGATO, but LEGATO is novel in its layer-specific gradient consideration. 
\cite{fu2019attackresistant} proposes a robust aggregation algorithm that ultimately dampens gradients through a worker-level reweighing scheme.  Distributed Momentum\cite{elmhamdi2020distributed} uses an exponential moving average to reduce the variance-norm ratio of gradients before feeding them to an expensive robust gradient aggregation algorithm (ie. Krum or coordinate median).
\cite{zhao2018federated, rieger2020client} exploit variance reduction to improve FL models' performance in non-IID settings, though not considering Byzantine threats.  Kardam \cite{damaskinos2018asynchronous} employs a gradient reweighing scheme that specifically intends to improve convergence, while a filtering step tackles robustness.  FedMA \cite{wang2020federated} works at a layer level, but it optimizes for permutation invariance of parameters in CNNs and LSTMs rather than for robustness.  \cite{zhang2019layers} demonstrates the layer-dependency of robustness, but solely in a centralized settings.

\section{Concluding Remarks}\label{sec:conclude}
LEGATO's novelty comes from its dynamic computation of layer-wise robustness factors for reweighing gradients when training neural networks in a FL setting.  Our empirical results show that LEGATO is robust across a wide range of Byzantine attack settings and does not reduce the performance of the model in absence of failures or attack.
In this paper, LEGATO specifically exploits the reaction of convolutional neural network layer architecture to identify robust layers and relies more heavily on them.  Other neural network structures can plausibly benefit from an adapted versions of LEGATO where the grouping of gradients is adjusted.  For instance, instead of grouping and reweighing gradients by layer, recurrent and residual neural networks could group by recurrent or residual blocks.
However, LEGATO is not applicable for use outside of neural networks.
Additionally, LEGATO is vulnerable to extreme outliers that come from high standard deviation in Gaussian attacks because it does not reject any gradient information.  However, we believe that such extremity is of lesser concern due to its blatancy.
In conclusion, LEGATO's is a general, scalable, efficient solution for Byzantine tolerance in FL that outperforms several state-of-the-art solutions in practical settings. 

\bibliographystyle{IEEEtran}
\bibliography{ref}

\end{document}